\documentclass[letterpaper, 10pt, conference]{ieeeconf}
\IEEEoverridecommandlockouts  
\overrideIEEEmargins          
\usepackage{amsmath}
\usepackage{amssymb}
\usepackage{bm}
\usepackage{color}
\usepackage{graphicx}
\usepackage{hyperref}
\usepackage[misc]{ifsym}



\newcommand{\bfepsilon}{\boldsymbol{\epsilon}}

\newcommand{\bfnu}{\boldsymbol{\nu}}

\newcommand{\bftau}{\boldsymbol{\tau}}

\newcommand{\bfomega}{\boldsymbol{\omega}}


\newcommand{\bfPhi}{\boldsymbol{\Phi}}

\newcommand{\bfPsi}{\boldsymbol{\Psi}}

\newcommand{\bfa}{{\bm{a}}}

\newcommand{\bfd}{{\bm{d}}}

\newcommand{\bff}{{\bm{f}}}
\newcommand{\bfg}{{\bm{g}}}

\newcommand{\bfl}{{\bm{l}}}
\newcommand{\bfm}{{\bm{m}}}
\newcommand{\bfn}{{\bm{n}}}

\newcommand{\bfp}{{\bm{p}}}
\newcommand{\bfq}{{\bm{q}}}
\newcommand{\bfr}{{\bm{r}}}
\newcommand{\bfs}{{\bm{s}}}
\newcommand{\bft}{{\bm{t}}}
\newcommand{\bfu}{{\bm{u}}}
\newcommand{\bfv}{{\bm{v}}}
\newcommand{\bfw}{{\bm{w}}}
\newcommand{\bfx}{{\bm{x}}}

\newcommand{\bfA}{\mathbf{A}}
\newcommand{\bfB}{\mathbf{B}}
\newcommand{\bfC}{\mathbf{C}}

\newcommand{\bfE}{\mathbf{E}}
\newcommand{\bfF}{\mathbf{F}}
\newcommand{\bfG}{\mathbf{G}}

\newcommand{\bfL}{\mathbf{L}}

\newcommand{\bfP}{\mathbf{P}}
\newcommand{\bfQ}{\mathbf{Q}}

\newcommand{\bfU}{\mathbf{U}}


\newcommand{\calC}{{\cal C}}

\newcommand{\calO}{{\cal O}}

\newcommand{\calT}{{\cal T}}

\newcommand{\bbA}{{\mathbb{A}}}

\newcommand{\bbP}{{\mathbb{P}}}


\newcommand{\colvec}[1]{\left[\begin{array}{c} #1 \end{array}\right]}
\newcommand{\defeq}{\stackrel{\mathrm{def}}{=}}
\renewcommand{\th}[1]{\ensuremath {#1}^{\textrm{th}}}

\def\CWC{\mathrm{CWC}}
\def\DSL{\textsf{\textsc{ds-l}}}
\def\DSR{\textsf{\textsc{ds-r}}}

\def\SSL{\textsf{\textsc{ss-l}}}
\def\SSR{\textsf{\textsc{ss-r}}}
\def\all{\text{all}}
\def\bbA{\mathbb{A}}
\def\bbP{\mathbb{P}}
\def\conv{\textit{conv}}
\def\ds{\textsf{\textsc{ds}}}
\def\eg{\emph{e.g.},~}
\def\ie{\emph{i.e.},~}
\def\pdd{\ddot{\bfp}}
\def\pd{\dot{\bfp}}

\def\qd{\dot{\bfq}}
\def\rays{\textit{rays}}
\def\rem{\textrm{rem}}
\def\sigmabfa{\sigma_{\hat\bfa}}
\def\ss{\textsf{\textsc{ss}}}
\def\xdd{\ddot{x}}
\def\xt{\tilde{x}}
\def\yt{\tilde{y}}
\def\ydd{\ddot{y}}
\def\zdd{\ddot{z}}

\newtheorem{proposition}{Proposition}

\hyphenation{half-space}


\title{\LARGE \bf
    Multi-contact Walking Pattern Generation based on 
    \\ Model Preview Control of 3D COM Accelerations
}

\author{St\'ephane Caron$^{1}$ and Abderrahmane Kheddar$^{1,2}$
    \thanks{*This work is supported in part by H2020 EU project COMANOID
    \url{http://www.comanoid.eu/}, RIA No 645097.}
    \thanks{$^{1}$CNRS-UM2 LIRMM, IDH group, UMR5506, Montpellier, France.}%
    \thanks{$^{2}$CNRS-AIST Joint Robotics Laboratory (JRL), UMI3218/RL. \newline
    Corresponding author: {\tt\footnotesize stephane.caron@normalesup.org}}%
}

\begin{document}

\maketitle
\thispagestyle{empty}
\pagestyle{empty}

\begin{abstract}
    We present a multi-contact walking pattern generator based on
    preview-control of the 3D acceleration of the center of mass (COM). A key
    point in the design of our algorithm is the calculation of
    contact-stability constraints. Thanks to a mathematical observation on the
    algebraic nature of the frictional wrench cone, we show that the 3D volume
    of feasible COM accelerations is always an upward-pointing cone. We
    reduce its computation to a convex hull of (dual) 2D points, for which
    optimal $\calO(n \log n)$ algorithms are readily available. This reformulation
    brings a significant speedup compared to previous methods, which allows us
    to compute time-varying contact-stability criteria fast enough for the control
    loop. Next, we propose a conservative \emph{trajectory-wide}
    contact-stability criterion, which can be derived from COM-acceleration
    volumes at marginal cost and directly applied in a model-predictive
    controller. We finally implement this pipeline and exemplify it with the
    HRP-4 humanoid model in multi-contact dynamically walking scenarios.
\end{abstract}

\section{Introduction}

Years ago, humanoid robots were considered as research platforms with vague
perspectives in terms of concrete applications. Without much conviction, they
were envisioned for entertainment, as receptionists, or as a high-tech
show-case for other businesses. Some projects are challenging humanoids to be a
daily companion or an assistant for frail
persons\footnote{\url{http://projetromeo.com/}}. The DARPA robotics challenge
boosted the idea that humanoid robots can operate in disaster interventions.
The challenge exhibited interesting developments while highlighting the road
ahead. Nowadays, Airbus Group seriously envisions humanoids as manufacturing
robots to act in large-scale airliner assembly lines. What makes humanoid
robots a plausible solution in these applications is their physical ability to
move in confined spaces, on non-flat floors, using stairs, etc. In such
environments, there are large parts where the robot has to walk robustly. 

\begin{figure}[!t]
    \centering
    \includegraphics[width=0.98\columnwidth]{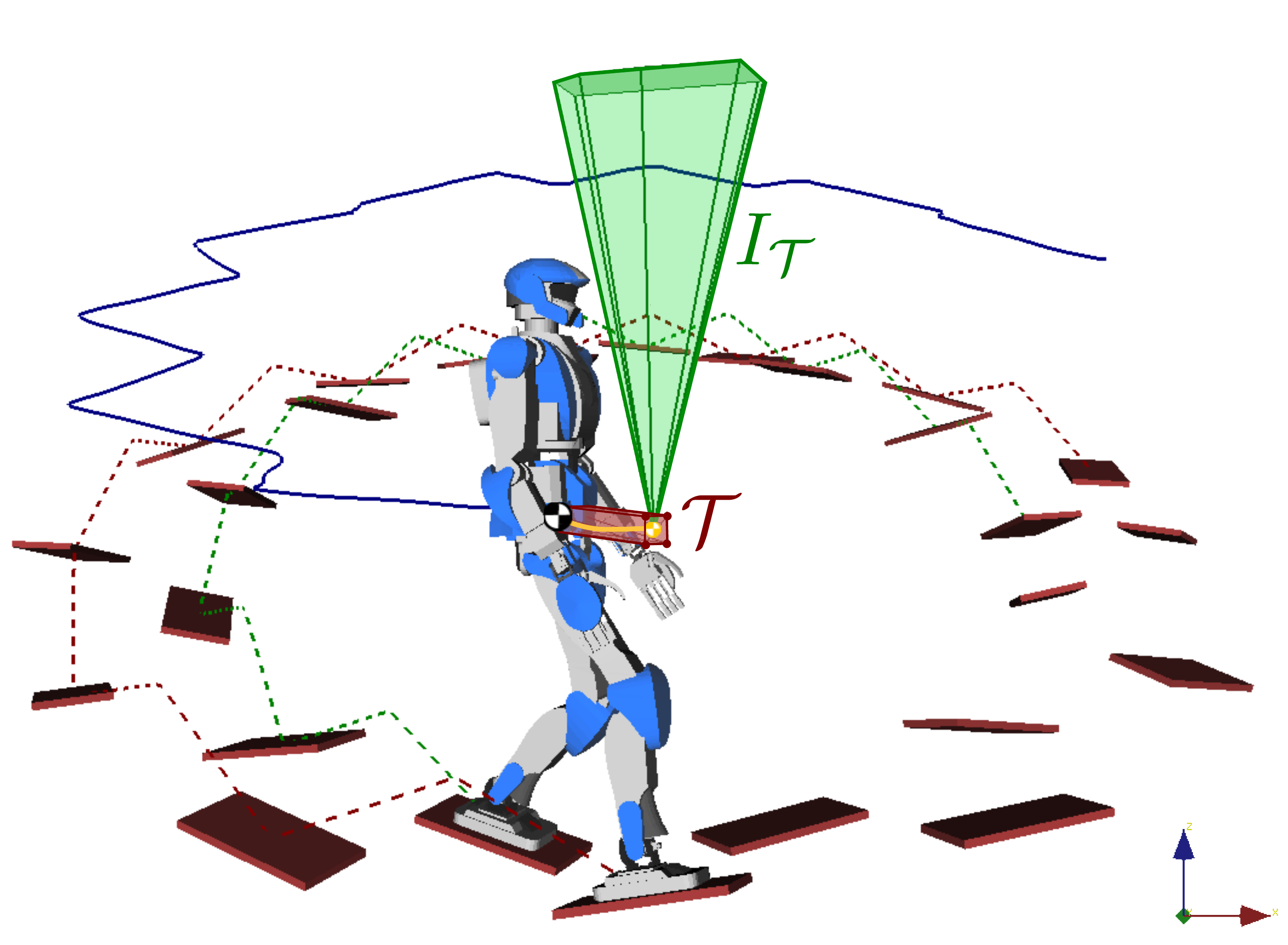}
    \caption{
        HRP-4 walking on a circular staircase with tilted stepping stones using
        a preview controller based on 3D COM accelerations. By bounding future
        COM positions (preview trajectory in yellow) into a polytope $\calT$
        (red box), we derive a \emph{trajectory-wide} contact-stability
        condition, the COM acceleration cone $I_\calT$ (in green), that is
        efficiently computed using a 2D convex hull algorithm. (Scale and
        position of this acceleration cone were chosen arbitrarily for
        depiction purposes.) See the accompanying video~\cite{code} for
        demonstrations of the controller in various multi-contact scenarios.
    }
    \label{fig:staircase}
\end{figure}

Walking robustly on uneven floors is still an open problem in humanoid
research. Recently, Boston Dynamics released an impressive video showing robust
humanoid walks on various
terrains\footnote{\url{https://www.youtube.com/watch?v=rVlhMGQgDkY}}. This
demonstration proves that the goal can be achieved. One key difficulty in
locomotion is that the viability (the ability to avoid falling) of the states
traversed while walking depends on future contacts. This problem can be
addressed geometrically, as done in~\cite{englsberger2015tro} using the
generalized 3D capture point, or using dynamic programming as done
in~\cite{zhao2016rss} walking in the phase-space of the center of mass (COM),
the latter being constrained on predefined surfaces that conform to terrain
shape.

Model Predictive Control (MPC) is another widely applied framework that gives
controllers the required hindsight to tackle this question. Following the
design introduced by Hirukawa et al.~\cite{hirukawa2007icra}, one active line
of research~\cite{audren2014iros, herzog2015humanoids, carpentier2015icra} uses
contact forces as control variables, which produces optimization problems with
simple inequality constraints but a high number of control variables. Another
line of research reduces the problem to the center of mass
motion~\cite{brasseur2015humanoids, naveau2017ral, vanheerden2017ral,
caron2016tro}. Optimization problems are then much smaller, but their
inequality constraints become quadratic (and
non-positive-semidefinite~\cite{vanheerden2017ral}). So far, this problem has only
been addressed for walking on parallel horizontal surfaces:
in~\cite{brasseur2015humanoids}, by bounding vertical COM accelerations to keep
the formulation linear, and in~\cite{naveau2017ral, vanheerden2017ral}, where
Sequential Quadratic Programming was used to cope with quadratic inequalities.

In this paper, we introduce a method that decouples the quadratic inequalities
of the general multi-contact problem into pairs of linear constraints, thus
opening the way for resolution with classical MPC solvers.

\section{Background}

\subsection{Screw algebra}

Humanoid robots are commonly modeled as a set of rigid bodies and joints whose motion can be 
described by \emph{screws}~\cite{featherstone2014}. A screw $\bfs_O = (\bfr, \bfm_O)$ is a
vector field generated by two vectors: its \emph{resultant} $\bfr$ and its
\emph{moment} $\bfm_O$ at a given point $O$. From $\bfm_O$ and $\bfr$, the
moment at any other point $P$ results from the Varignon formula:
\begin{eqnarray}
    \label{varignon}
    \bfm_P & = & \bfm_O + \overrightarrow{PO} \times \bfr.
\end{eqnarray}

The generalized velocity of a rigid body, called \emph{twist}, is the screw
$\bft_O = (\bfv_O, \bfomega)$ with resultant $\bfomega$, the angular velocity, 
and moment $\bfv_O$, its velocity at given point $O$. A
generalized force acting on the body, called \emph{wrench}, is a screw
$\bfw_O = (\bff, \bftau_O)$ with resultant net force $\bff$, 
and net moment $\bftau_O$, at a reference point $O$. 
Although the coordinate vector $\bfs_P$ of a screw depends on the point
$P$ where it is taken, the screw itself does not depend on the choice of $P$ as
a consequence of the Varignon formula~\eqref{varignon}. We denote
screws with hats $(\hat\bft$, $\hat\bfw)$ and their coordinates with
point subscripts $(\bft_O$, $\bfw_O)$.

Twists and wrenches live in two dual spaces~\cite{featherstone2014}: the motion
space $\mathsf{M}^6$ and the force space $\mathsf{F}^6$. The scalar product
between a twist $\hat\bft \in \mathsf{M}^6$ and a wrench $\hat\bfw \in
\mathsf{F}^6$ is given by:
\begin{equation}
    \hat\bft \cdot \hat\bfw \ \defeq \ \bft_O \cdot \bfw_O \ = \ \bfv_O \cdot
    \bff + \bfomega \cdot \bftau_O.
\end{equation}
From~\eqref{varignon}, this number does not depend on the
point $O$ where it is computed. When $\hat\bft$ and $\hat\bfw$ are 
acting on a single rigid body, their product is the
instantaneous power of the motion.

\subsection{Newton-Euler equations}

Let $m$ denote the total mass of the robot and $G$ its center of mass (COM). We
write $\bfp_A$ the coordinate vector of a point $A$ in the inertial frame and
denote by $O$ the origin of this frame (so that $\bfp_O = \bm{0}$).
Suppose that contacts between the robot and its environment are described by
$K$ contact points. (This formulation includes surface contacts; see
\emph{e.g.}~\cite{caron2015icra}.) The Newton-Euler equations of motion of the
whole robot are then given by:
\begin{equation}
    \label{newton-euler}
    \left[ \begin{array}{c} 
            m \pdd_G \\ 
            \dot{\bfL}_G
    \end{array} \right]
    \ = \
    \left[ \begin{array}{c} 
            m \bfg \\ 
            {\bm{0}} 
    \end{array}\right]
    \, + \, 
    \sum_{i=1}^K
    \colvec{\bff_{i} \\ {\overrightarrow{GC}_{i}} \times \bff_{i}}
\end{equation}
where $\bfL_G$ is the angular momentum of the robot around $G$, $\bfg = [0\ 0\
{-g}]^\top$ is the gravity vector defined from the gravity constant $g \approx
9.81\, \textrm{m}\,\textrm{s}^{-1}$ and $\bff_i$ is the force exerted onto the
robot at the $\th{i}$ contact point $C_i$. We say that a contact force $\bff_i$
is \emph{feasible} when it lies in the friction cone $\calC_i$ directed by the
contact normal $\bfn_i$, \ie
\begin{eqnarray}
    \label{fric-cones}
    \| \bff_i - (\bff_i \cdot \bfn_i) \bfn_i \|_2 & \leq & \mu_i (\bff_i \cdot
    \bfn_i)
\end{eqnarray}
where $\mu_i$ is the static friction coefficient. The problem of \emph{contact
stability} (also called \emph{force balance}) is to find whole-body motions for
which~\eqref{newton-euler} admits solutions with feasible contact forces
$\{\bff_i\}$.

The Newton-Euler equations describe the components of motion that are
independent from the actuation power of the robot, and play a critical role in
locomotion. Most of today's trajectory generators~\cite{kajita2003icra,
hirukawa2007icra, mordatch2010tog, brasseur2015humanoids, herzog2015humanoids,
englsberger2015tro, vanheerden2017ral} focus on solving these equations and
rely on whole-body controllers to take actuation limits into account at a later
stage of the motion generation process.

\subsection{Wrench cones}

Equations~\eqref{newton-euler}-\eqref{fric-cones} include a large number of
force variables. Although some walking pattern generators chose to work
directly on this representation~\cite{hirukawa2007icra}, another line of
research~\cite{qiu2011isdhm, caron2015rss} found that these force variables can
be eliminated by propagating their inequality constraints~\eqref{fric-cones}
into inequalities on the target rate of change $(m \pdd_G, \dot{\bfL}_G)$ of
the whole-body momentum.

Define the net \emph{contact wrench} by:
\begin{equation}
    \label{cwc-def}
    \bfw_O \ =\ \colvec{\bff \\ \bftau_O}
    \ \defeq\  \sum_{i=1}^K \colvec{\bff_{i} \\ {\overrightarrow{OC}_{i}}
    \times \bff_{i}}
\end{equation}
In matrix form, $\bfw_O = \bfG_O \bff_\all$ where $\bff_\all$ is the stacked
vector of contact forces and $\bfG_O$ is the \emph{grasp matrix}. This wrench
can be directly computed from whole-body motions, as it only differs from the
whole-body momentum by a constant $\hat\bfw^g$ due to gravity.

Next, one can linearize regular friction cones~$\calC_i$ into polyhedral convex
cones $\widetilde{\calC}_i$, so that~\eqref{fric-cones} becomes in matrix form
(see \emph{e.g.}~\cite{caron2015rss} for details):
\begin{equation}
    \label{lin-fric-cones}
    \bff_i \in \widetilde{\calC}_i\ \Leftrightarrow\ \bfF_i \bff_i \leq \bm0
\end{equation}
This form is known as the \emph{halfspace representation} of a polyhedral cone.
From the Weyl-Minkowski theorem, any polyhedron thus described can be
equivalently written as:
\begin{equation}
    \widetilde{\calC}_i \ = \ \conv(\{\bfv_i\}) + \rays(\{\bfr_j\}),
\end{equation}
where $\conv(\{\bfv_i\}) = \{ \sum_i \alpha_i \bfv_i, \forall i\,\alpha_i \geq 0
\sum_i \alpha_i = 1\}$ is the \emph{convex hull} of a set of vertices, and
similarly $\rays(\{\bfr_j\}) = \{ \sum_i \lambda_i \bfr_i, \forall i\, \lambda_i
\geq 0\}$ denotes positive combinations of a set of rays. This form is known as
the \emph{vertex representation} of a polyhedron.

Using suitable conversions between these two
representations~\cite{qiu2011isdhm, caron2015rss}, one can finally compute the
Contact Wrench Cone (CWC) described in halfspace representation by:
\begin{equation}
    \label{cwc-hrepr}
    \bfA_O \bfw_O \leq \bm0
\end{equation}
By construction, a net contact wrench $\bfw_O$ belongs to the CWC if and only
if there exists a set of contact forces $\{\bff_i\}$ satisfying
both~\eqref{newton-euler}-\eqref{cwc-def} and \eqref{lin-fric-cones}. Hence,
the CWC provides a necessary and sufficient condition for the contact stability
of whole-body motions.

\section{Friction Cones are Dual Twists}

Let us consider a row $\bfa = [\bfa_1^\top \bfa_2^\top]^\top$ of the CWC matrix
$\bfA_O$. It defines an inequality constraint of the form
\begin{equation}
    \label{fc1}
    \bfa_1 \cdot \bff + \bfa_2 \cdot \bftau_O \ \leq \ 0
\end{equation}
Applying the Varignon formula~\eqref{varignon}, the cone for the same wrench
$\bfw_G$ taken at a different point $G$ is subject to:
\begin{equation}
    \bfa_1 \cdot \bff + \bfa_2 \cdot (\bftau_G + \overrightarrow{OG} \times
    \bff) \ \leq \ 0
\end{equation}
Using the invariance of the mixed product under circular shift, we can rewrite
the left-hand side as:
\begin{equation}
    \label{fc2}
    (\bfa_1 + \overrightarrow{GO} \times \bfa_2) \cdot \bff + \bfa_2 \cdot
    \bftau_G \ \leq \ 0
\end{equation}
Let us now defined the \emph{dual twist} $\hat{\bfa} \in \mathsf{M}^6$ by:
\begin{equation}
    \colvec{\bfa_O \\ \bfa} \ \defeq \ \colvec{\bfa_1 \\ \bfa_2}
\end{equation}
Equations~\eqref{fc1} and \eqref{fc2} rewrite to:
\begin{eqnarray}
    \label{fcs1} \bfa_O \cdot \bff + \bfa \cdot \bftau_O & \leq & 0 \\
    \label{fcs2} \bfa_G \cdot \bff + \bfa \cdot \bftau_G & \leq & 0
\end{eqnarray}
where $\bfa_G = \bfa_1 + \overrightarrow{GO} \times \bfa_2$. In concise form:
\begin{equation}
    \hat\bfa \cdot \hat\bfw \ \leq \ 0
\end{equation}
This inequality is independent from $O$ where $\bfA_O$ is computed. 
Therefore, the CWC can be interpreted as a set
of dual twists, the coordinates $\bfA_O$ of which one can compute at a fixed
reference point using known techniques~\cite{qiu2011isdhm, caron2015rss}.

This shift in the way of considering the cone has an important implication:
using the Varignon formula, we can now calculate {analytically} the cone
$\bfA_G$ at a mobile point $G$ using a fixed solution $\bfA_O$ and the vector
coordinates $\bfp_G$. Our following contributions build upon this property.

\section{Contact Stability Areas and Volumes}
\label{areas-and-volumes}

\subsection{Static-equilibrium COM polygon}
\label{sep-poly}

Bretl and Lall~\cite{bretl2008tro} showed how static equilibrium can be
sustained by feasible contact forces if and only if the (horizontal projection
of the) center of mass lies inside a specific polygon, 
henceforth called the \emph{static-equilibrium polygon}. 

In fact, the static-equilibrium polygon is embedded in the CWC.
Suppose that its matrix $\bfA_O$ was computed at a given point $O$, and
let $\hat\bfa$ denote a twist of the CWC corresponding to the
inequality~\eqref{fcs1}. In static equilibrium, the whole-body momentum is
zero, so that the net contact wrench $\bfw_G$ at the center of mass $G$ is
simply opposed to gravity:
\begin{equation}
    \bfw_G \ = \ \colvec{\bff \\ \bftau_G} \ = \ \colvec{-m \bfg \\ \bm0}
\end{equation}
Then, expressing \eqref{fcs1} at $G$,~\eqref{fcs2} yields:
\begin{equation}
    \bfa_G \cdot (-m \bfg) + \bfa \cdot \bm0 \ \leq \ 0
\end{equation}
which also writes, since $m>0$:
\begin{equation}
    -(\bfa_O + \bfa \times \bfp_G) \cdot \bfg \ \leq \ 0
\end{equation}
Expanding this scalar product yields:
\begin{equation}
    \label{fsc2d}
    a_{Oz} - a_y x_G + a_x y_G\ \leq\ 0
\end{equation}
where $\bfa_O = [a_{Ox}\ a_{Oy}\ a_{Oz}]^\top$ and $\bfa = [a_x\ a_y\
a_z]^\top$. The set of inequalities~\eqref{fsc2d} over all twists $\hat\bfa$ of
the CWC provides the half-plane representation of the static-equilibrium
polygon. Note that the static-equilibrium polygon
does not depend on the mass, which was not observed in previous
works~\cite{bretl2008tro, delprete2016icra, caron2016tro, zhang2016ijhr}.

In what follows, we will use the following equivalent formulation. Let us
define the \emph{slackness} of~\eqref{fsc2d} by:
\begin{equation}
    \sigma_{\hat\bfa}(x_G, y_G) \ \defeq \ -a_{Oz} + a_y x_G - a_x y_G
\end{equation}
it is the signed distance between $(x_G, y_G)$ and
the supporting line $-a_y x + a_x y + a_{Oz} = 0$ of the corresponding 
static-equilibrium polygon's edge. A point $(x_G, y_G)$ is then inside the polygon
if and only if $\sigmabfa(x_G, y_G) \geq 0$ for all the CWC twists $\hat\bfa$.

\subsection{Vertex enumeration for polygons}
\label{vertex-enum}

The half-plane representation~\eqref{fsc2d} is best-suited for COM feasibility
tests. Meanwhile, the vertex representation is best-suited for planning.
Converting from halfspace to vertex representation is known as the \emph{vertex
enumeration} problem, for which the \emph{double description
method}~\cite{fukuda1996double} has been applied in previous
works~\cite{bouyarmane2009icra, qiu2011isdhm, escande2013ras, caron2015rss}.

For general $d$-dimensional polyhedra, vertex enumeration has polynomial, yet
super-linear time complexity. For example, the Avis-Fukuda
algorithm~\cite{avis1992dcg} runs in $\calO(d h v)$, with $h$ and $v$ the
numbers of hyperplanes and vertices, while the original double-description
method by Motzkin has a worst-case time complexity of $\calO(h^2
v^3)$~\cite{fukuda1996double}. Yet, for $d=2$, the problem boils into computing
the \emph{convex hull} of a set of points, for which optimal algorithms (for
instance \cite{kirkpatrick1986siam}) are known that match the theoretical
lower-bound of $\Omega(h \log v)$.

Our formulation~\eqref{fsc2d} allows us to enumerate vertices in 2D. Let us
assume for now that the origin $(x_G, y_G) = (0, 0)$ lies in the interior of
the polygon, and divide each inequality~\eqref{fsc2d} by $a_{Oz}$ to put the
overall inequality system in polar form:
\begin{equation}
    \label{polar}
    \bfB \colvec{x_G \\ y_G} \ \leq \ \bm1
\end{equation}
We run a convex hull algorithm on the rows of the matrix $\bfB$. By duality,
the cyclic order of extreme points thus computed corresponds to a cyclic order
of adjacent edges for the primal problem. Intersecting pairs of adjacent lines
in this order yields the vertices of the initial polygon. The conversion of
inequalities~\eqref{fsc2d} to~\eqref{polar} being $\calO(n)$, computing the
output polygon is done overall in $\calO(h \log v)$. See the Appendix for a
comparison with existing approaches.

To construct~\eqref{polar}, we assumed that the origin lies inside the polygon.
When this is not the case, one can simply compute the Chebyshev center $(x_C,
y_C)$ by solving a single Linear Program (LP) as detailed \textit{e.g.} in
\cite{boyd2004convex} p. 148. From there, a translation $(x_G', y_G') = (x_G -
x_C, y_G - y_C)$ brings the origin inside the polygon.

\subsection{Pendular ZMP support areas}

Let us revisit the derivation of the pendular ZMP support
area~\cite{caron2016tro} using our new approach. To achieve linear-pendulum
mode of the Newton-Euler equations of the system, the following four equality
constraints are applied to the contact wrench:
\begin{eqnarray}
    \label{lp1} \bfn \cdot \bff & = & m (\bfn \cdot \bfg) \\
    \label{lp2} \bftau_G & = & \bm0
\end{eqnarray}
where $\bfn$ denotes the unit vector normal to the plane in which the ZMP is
taken. In what follows, we suppose that $\bfn$ is opposite to gravity, so that
$\bfn \cdot \bfg = -g$. Equation~\eqref{lp1} is used to linearize the pendulum
dynamics, the ZMP being defined in general by the non-linear formula $\bfp_Z
\defeq \frac{\bfn \times \bftau_O}{\bfn \cdot \bff} + \bfp_O$.
Under Equations~\eqref{lp1}-\eqref{lp2}, the resultant force can be computed
from COM and ZMP positions by~\cite{caron2016tro}:
\begin{equation}
    \label{forcelin}
    \bff \ = \ \frac{mg}{h} \colvec{x_Z - x_G \\ y_Z - y_G \\ 1}
\end{equation}
where $h = z_Z - z_G$ is the constant difference between ZMP and COM altitudes.
This value can be positive or negative: for the sake of exposition, we will
take $h > 0$. Injecting Equations~\eqref{lp1}-\eqref{lp2} into an inequality
constraint~\eqref{fcs2} of the CWC yields:
\begin{equation}
    \label{fgba}
    (\bfa_O + \bfa \times \bfp_G) \cdot \bff + \bfa \cdot \bm0 \ \leq \ 0
\end{equation}
Substituting~\eqref{forcelin} into~\eqref{fgba} yields:
\begin{equation}
    a_i (x_Z - x_G) + b_i (y_Z - y_G) 
    \ \leq \ h \sigma_{\hat\bfa}(\bfp_G)
    \label{link-sep-0}
\end{equation}
where
\begin{equation}
    \colvec{a_i \\ b_i} \ = \ 
    \colvec{a_{Ox} \\ a_{Oy}} + \colvec{
            a_y z_G - a_z y_G \\
    -a_x z_G + a_z x_G}
\end{equation}
Assuming that the COM lies inside the static-equilibrium
polygon,\footnote{Otherwise, center the polygon on its Chebyshev center as
previously.} the right-hand side of this expression is positive
from~\eqref{fsc2d}. The inequality is then expressed in polar form as
\begin{equation}
    \label{zmp-polar}
    \bfB_{\mathrm{ZMP}}(\bfp_G) \colvec{x_Z - x_G \\ y_Z - y_G} \ \leq \ \bm1
\end{equation}
where the origin $(x_Z, y_Z) = (x_G, y_G)$ lies inside the polygon by
construction. As in~\eqref{polar}, a convex hull algorithm can finally be
applied to compute the vertices of the pendular ZMP support area.

\subsection{3D Volume of COM accelerations}
\label{3dvol}

Equation~\eqref{lp1} is a limitation of the linear-pendulum mode in that the
COM trajectory needs to lie in a plane pre-defined by the vector $\bfn$. This
limitation is all the less grounded that, from Equation~\eqref{forcelin},
controlling the ZMP in this mode is equivalent to controlling the resultant
contact force $\bff$. We therefore propose to directly control this force, or
equivalently, to directly control the three-dimensional COM acceleration
$\pdd_G = \frac1m \bff + \bfg$.

Substituting this acceleration into~\eqref{fgba}, one gets:
\begin{equation}
    \label{fcs3}
    (\bfa_O + \bfa \times \bfp_G) \cdot \pdd_G 
    \ \leq \ (\bfa_O + \bfa \times \bfp_G) \cdot \bfg
\end{equation}
Expanding scalar products, this inequality rewrites to:
\begin{equation}
    \label{link-with-sep}
    a_i \xdd_G + b_i \ydd_G - \sigma_{\hat\bfa} \zdd_G \ \leq \ g
    \sigma_{\hat\bfa}
\end{equation}
(We dropped the argument $\bfp_G$ of $\sigma_{\hat\bfa}$ to alleviate
notations.) Assuming that the COM lies in the interior of the polygon$^3$
($\sigma_{\hat\bfa} > 0$) and that $\zdd_G > -g$, 
\begin{equation}
    \label{force-polar}
    \left(\frac{a_i}{\sigmabfa}\right) \cdot \frac{\xdd_G}{g + \zdd_G} +
    \left(\frac{b_i}{\sigmabfa}\right) \cdot \frac{\ydd_G}{g + \zdd_G} \ \leq \
    1
\end{equation}
This expression is in polar form $\bfB_{\mathrm{3D}}(\bfp_G) [\xt\ \yt]^\top
\leq \bm1$ for the new coordinates:
\begin{equation}
    \colvec{\xt \\ \yt} \ = \ \frac{1}{g + \zdd_G} \colvec{\xdd_G \\ \ydd_G}
\end{equation}
We can enumerate the vertices $\{(\xt_i, \yt_i)\}$ of the corresponding polygon
using a convex hull again. For a given vertical acceleration $\zdd_{G,i} > -g$,
the COM acceleration coordinates $(\xdd_{G,i}, \ydd_{G,i})$ corresponding to a
vertex $(\xt_i, \yt_i)$ are:
\begin{equation}
    \colvec{\xdd_{G,i} \\ \ydd_{G,i}} \ = \ (g + \zdd_{G,i}) \colvec{\xt_i \\
    \yt_i}
\end{equation}
We recognize the equation of a 3D polyhedral convex cone, pointing upward,
with apex located at $(\xdd_G, \ydd_G, \zdd_G) = (0, 0, -g)$ and rays defined by
$\bfr_i = [\xt_i\ \yt_i\ 1]^\top$. Figure~\ref{fig:config3} shows the cone
in a sample contact configuration.

\begin{figure}[t]
    \centering
    \includegraphics[width=0.98\columnwidth]{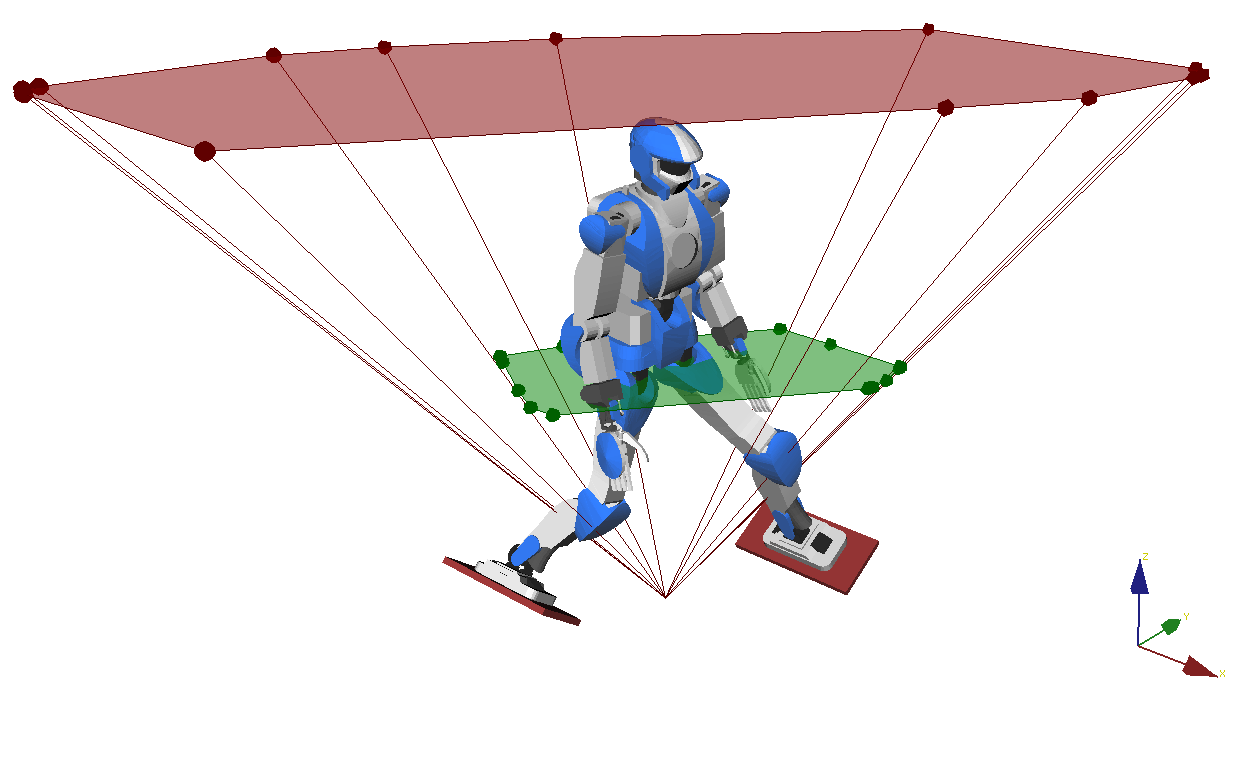}
    \caption{
        Static-equilibrium polygon (in green at the altitude of the center of
        mass) and cone of 3D COM accelerations (in red, with the zero
        acceleration centered on the COM) in a double-support configuration.
        For the latter, the scaling from accelerations to positions is $0.08\
        \textrm{s}^2$, and the cone was cut at $\zdd_G = g$ to show its
        cross-section.
    }
    \label{fig:config3}
\end{figure}

Overall, we have thus both (1) a geometric characterization and (2) an
algorithm to compute the cone of feasible COM accelerations when the angular
momentum is regulated to zero.\footnote{The same derivation can be applied
with non-zero angular-momentum references; however, the question of finding
such references is still open.} This construction generalizes the pendular
support area defined in~\cite{caron2016tro}. Furthermore, as a consequence of
Equations~\eqref{link-sep-0}-\eqref{link-with-sep}, we have the following
property:

\begin{proposition}
    The COM is in the interior of the static-equilibrium polygon if and only if
    the set of feasible COM accelerations (equivalently, whole-body ZMPs) under
    zero angular momentum contains a neighborhood of the origin.
\end{proposition}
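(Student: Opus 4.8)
The plan is to establish the two directions of the biconditional by exploiting the link between the slackness functions $\sigma_{\hat\bfa}$ and the feasibility inequality~\eqref{link-with-sep}. The key observation is that the origin of the COM-acceleration space, namely $(\xdd_G, \ydd_G, \zdd_G) = (0,0,0)$, corresponds to the zero net force (free fall) and is feasible exactly when the right-hand side $g\,\sigma_{\hat\bfa}$ of~\eqref{link-with-sep} is nonnegative for every CWC twist $\hat\bfa$; and the \emph{strict} feasibility of a full neighborhood corresponds to $\sigma_{\hat\bfa} > 0$ strictly, which by the remark after~\eqref{fsc2d} is precisely the condition that the COM lies in the open interior of the static-equilibrium polygon.

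First I would prove the forward direction. Assuming $\bfp_G$ is in the interior of the polygon, we have $\sigma_{\hat\bfa}(x_G, y_G) > 0$ for every twist $\hat\bfa$ of the CWC, as noted below~\eqref{fsc2d}. Evaluating the feasibility constraint~\eqref{link-with-sep} at the origin $(\xdd_G, \ydd_G, \zdd_G) = (0,0,0)$ gives $0 \leq g\,\sigma_{\hat\bfa}$, which holds strictly. Since there are finitely many CWC twists and each $\sigma_{\hat\bfa}$ is strictly positive, the continuous left-hand side of~\eqref{link-with-sep}, viewed as a function of $(\xdd_G, \ydd_G, \zdd_G)$, is strictly below its bound at the origin for all constraints simultaneously. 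Hence there exists an open ball around the origin on which every inequality~\eqref{link-with-sep} remains satisfied, and this ball lies in the set of feasible COM accelerations.

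For the converse, I would argue by contraposition. Suppose $\bfp_G$ is \emph{not} in the interior of the polygon, so that some CWC twist $\hat\bfa^\star$ has $\sigma_{\hat\bfa^\star} \leq 0$. Then the corresponding inequality~\eqref{link-with-sep} reads $a_i \xdd_G + b_i \ydd_G - \sigma_{\hat\bfa^\star}\zdd_G \leq g\,\sigma_{\hat\bfa^\star} \leq 0$, which defines a halfspace whose boundary passes through (or lies on the far side of) the origin. Consequently no open neighborhood of the origin can satisfy this single constraint, and \emph{a fortiori} the feasible set contains no neighborhood of the origin. The equivalence between feasible COM accelerations and whole-body ZMPs is immediate from Equation~\eqref{forcelin}, which is an invertible affine relation between $\bff$ (hence $\pdd_G$) and $\bfp_Z$ whenever the vertical dynamics is nondegenerate.

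The main obstacle is bookkeeping the two edge cases carefully: the degenerate constraints for which $\sigma_{\hat\bfa} = 0$ exactly (corresponding to the COM lying \emph{on} an edge rather than strictly outside), and the domain restriction $\zdd_G > -g$ assumed in deriving~\eqref{force-polar}. For the forward direction this restriction is harmless, since the origin $\zdd_G = 0$ satisfies it with room to spare, so a small ball stays inside the valid domain; but I should verify that the polar reformulation is not needed for the argument and that working directly with the linear form~\eqref{link-with-sep} (valid for all $\zdd_G$) cleanly covers both directions without dividing by a possibly small quantity $g + \zdd_G$.
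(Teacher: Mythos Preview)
Your argument is correct and is exactly the route the paper itself points to: the proposition is stated there without proof, merely ``as a consequence of Equations~\eqref{link-sep-0}--\eqref{link-with-sep}'', and your two directions simply unpack that consequence via the sign of $\sigma_{\hat\bfa}$ at the origin of acceleration space. One small slip worth fixing: the origin $(\xdd_G,\ydd_G,\zdd_G)=(0,0,0)$ is \emph{not} free fall (zero net force); it corresponds to the contact force exactly cancelling gravity, while free fall is $\pdd_G=\bfg$, the apex of the cone. This is only a wording issue and does not affect your reasoning.
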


In other words, the static-equilibrium polygon is not only related to static
equilibrium: it is also the set of positions from which the robot can
accelerate its center of mass in any direction (with zero angular momentum).
When the COM reaches the edge of this polygon, the zero acceleration touches a
facet of the acceleration cone. Denoting by $\bfd$ the facet normal, all
feasible accelerations $\pdd_G$ are then such that $\bfd \cdot \pdd_G \geq 0$,
\ie $\bfd$ is an ``irresistible'' direction of motion.

\section{Preview control of COM accelerations}
\label{mpc101}

Let the control variable $\bfu$ be the COM acceleration $\bfu :=
\pdd_G$. The discretized COM dynamics with sampling $\Delta T$ are:
\begin{eqnarray}
    \label{disdyn}
    \bfx(k + 1) & = & \bfA \bfx(k) + \bfB \bfu(k)
\end{eqnarray}
where $\bfx(k) = [\bfp_G(k\,\Delta T)^\top \ \pd_G(k\,\Delta T)^\top]^\top$ and, denoting by
$\bfE_3$ the $3 \times 3$ identity matrix,
\begin{equation}
    \bfA \: = \: \left[
        \begin{array}{rr}
            \bfE_3 & \Delta T \bfE_3 \\
            \bm{0}_{3 \times 3} & \bfE_3
    \end{array} \right]
    \quad
    \bfB \: = \: \left[
        \begin{array}{r}
        \frac12 \Delta T^2 \bfE_3 \\
        \Delta T \bfE_3
    \end{array} \right]
\end{equation}
At each control step, a preview controller receives
the current state $\bfx_0 = (\bfp_0, \pd_0)$ and
computes a sequence of controls $\bfu(0), \ldots, \bfu(N)$ driving the system
from $\bfx_0$ to $\bfx(N)$ at the end of the time horizon $T = N \Delta T$ of
the preview window.
By recursively applying~\eqref{disdyn}, $\bfx(k)$ can be
written as a function of $\bfx_0$ and of $\bfu(0), \ldots,
\bfu(k-1)$ (see \eg \cite{audren2014iros}):
\begin{eqnarray}
    \label{iter-disdyn}
    \bfx(k) & = & \bfPhi_k \bfx_0 + \bfPsi_k \bfU(k-1) \\
    \bfU(k) & = & [\bfu(0)^\top \ \cdots \ \bfu(k)^\top]^\top
\end{eqnarray}
A necessary condition for contact-stability throughout the trajectory is that
all accelerations $\bfu(k)$ lie in the COM-acceleration cone
$\calC(\bfp_G(k \Delta T))$. Expanding its inequalities~\eqref{fcs3} for an
arbitrary twist $\hat\bfa \in \CWC$ yields:
\begin{equation}
    \label{csfun1}
    \bfp_G(k\,\Delta T)^\top [-\bfa \times] (\bfu(k) - \bfg) + \bfa_O^\top (\bfu(k) - \bfg) \leq 0
\end{equation}
Let $L$ denote the number of twists in the CWC. By stacking up the $N$
inequalities~\eqref{csfun1}, we get in more concise form:
\begin{equation}
    \label{tensor-form}
    \bfp_G(k\,\Delta T)^\top \bbA_\times (\bfu(k) - \bfg) + \bfA'_O (\bfu(k) - \bfg) \leq \bm0
\end{equation}
where $\bbA_\times$ is a $3 \times L \times 3$ tensor, and $\bfA'_O$
consists of the first three columns of $\bfA_O$. Combining~\eqref{iter-disdyn} 
and~\eqref{tensor-form}, this
condition yields a set of quadratic inequality constraints of the form:
\begin{equation}
    \label{quad-ineq}
    \forall k < N,\ \bfU(k)^\top \bbP_k \bfU(k) + \bfQ_k \bfU(k) + \bfl_k \leq
    \bm0
\end{equation}
where $\bbP_k$ is a $3(k+1) \times L \times 3(k+1)$ tensor, $\bfQ_k$ is a $L
\times 3(k+1)$
matrix $\bfl_k$ is an $L$-dimensional vector. One can thus formulate the
preview control problem as a Quadratically Constrained Quadratic Program
(QCQP). Although a QCQP formulation was successfully applied for walking on
even terrains with variable COM height~\cite{vanheerden2017ral}, we chose not
to do so for the following reasons:
\begin{itemize}
    \item QCQP is a harder class of problems than Quadratic Programming (QP),
        especially when inequality constraints are not
        positive-semidefinite~\cite{vanheerden2017ral} so that the problem
        non-convex. Real-timeness implies that only a small number of SQP
        iterations can be run in the control loop (two
        in~\cite{vanheerden2017ral}), thus with no convergence guarantee.
    \item Constraints \eqref{quad-ineq} are given without any redundancy
        elimination. In practice, eliminating redundancy (in our case, by
        applying a convex hull algorithm) significantly reduces the number of
        inequality constraints (Table~\ref{table:dimrec}).
\end{itemize}
Instead, we propose a trajectory-wide contact-stability criterion that yields
linear inequality constraints, and for which we can apply the convex-hull
reduction from Section~\ref{areas-and-volumes}.

\subsection{Robust trajectory-wide contact-stability criterion}
\label{sec:tubes}

We want to drive the COM from $\bfx_0 = (\bfp_0, \pd_0)$, its current state, to
a goal position $\bfx_T$ through a trajectory $t \in [0, T] \mapsto \bfp_G(t)$.
Due to the initial velocity $\pd_G(0)$ and real-world uncertainties, the
trajectory will not be exactly a line segment $[\bfp_0, \bfp_T]$. Yet, we
assume that it lies within a polyhedral ``tube'' $\calT = \conv(\{\bfnu_1,
\ldots, \bfnu_q\})$ containing $[\bfp_0, \bfp_T]$. A point $\bfp_G \in \calT$
can be written as a convex combination $\bfp_G = \sum_{i=1}^q \alpha_i
\bfnu_i$, where the $\alpha_i$'s are positive and sum-up to one, so that the
inequality~\eqref{tensor-form} becomes:
\begin{equation}
    \label{csfun2}
    \sum_{i=1}^q \alpha_i (\bfnu_i^\top \bbA_\times + \bfA_O') (\pdd_G - \bfg) \ \leq \ \bm0
\end{equation}
A particular way to enforce the negativity of a convex combination is to ensure
that all of its terms are negative. Thus, a sufficient condition for the
satisfaction of~\eqref{csfun2} is
\begin{equation}
    \label{csint1}
    \bfC_\calT (\pdd_G - \bfg) \leq \bm0, \quad
    \bfC_\calT := \colvec{\bfnu_1^\top \bbA_\times + \bfA_O' \\ \vdots \\
    \bfnu_q^\top \bbA_\times + \bfA_O'}
\end{equation}
which is the halfspace representation of the intersection 
\begin{equation}
    I_\calT \ := \ \calC(\bfnu_1) \cap \cdots \cap \calC(\bfnu_q) 
    \ \subseteq \ \calC(\bfp_G)
\end{equation}

\begin{proposition}
    $I_\calT$ is the set of COM accelerations that are feasible everywhere in $\calT$, \ie
    \begin{equation}
        \label{I-bigcap}
        I_\calT = \bigcap_{\bfp \in \calT} \calC(\bfp)
    \end{equation}
\end{proposition}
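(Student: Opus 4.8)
The plan is to establish the set equality \eqref{I-bigcap} by proving the two inclusions separately, and the whole argument hinges on a single structural fact: the halfspace description of $\calC(\bfp)$ depends \emph{affinely} on the position $\bfp$. Reading off \eqref{tensor-form}, the cone is
\[
    \calC(\bfp) \: = \: \{\, \pdd_G : (\bfp^\top \bbA_\times + \bfA_O')(\pdd_G - \bfg) \leq \bm0 \,\},
\]
so for a fixed acceleration $\pdd_G$ the constraint vector on the left is an affine function of $\bfp$, with linear part $\bfp^\top\bbA_\times(\pdd_G-\bfg)$ and constant block $\bfA_O'(\pdd_G-\bfg)$. This is the only property I will use.

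The reverse inclusion $\bigcap_{\bfp \in \calT} \calC(\bfp) \subseteq I_\calT$ is immediate and I would dispatch it first: each vertex $\bfnu_i$ lies in $\calT = \conv(\{\bfnu_1,\ldots,\bfnu_q\})$, so any acceleration that is feasible at \emph{every} point of $\calT$ is in particular feasible at each $\bfnu_i$, hence belongs to $I_\calT = \calC(\bfnu_1)\cap\cdots\cap\calC(\bfnu_q)$. For the forward inclusion $I_\calT \subseteq \bigcap_{\bfp \in \calT} \calC(\bfp)$, I would take $\pdd_G \in I_\calT$ and an arbitrary $\bfp \in \calT$, written as a convex combination $\bfp = \sum_{i=1}^q \alpha_i \bfnu_i$ with $\alpha_i \geq 0$ and $\sum_i \alpha_i = 1$. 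Because $\sum_i \alpha_i = 1$, the constant block absorbs into the combination as $\bfA_O' = \sum_i \alpha_i \bfA_O'$, and the constraint vector at $\bfp$ becomes exactly the right-hand side of \eqref{csfun2}, namely $\sum_i \alpha_i (\bfnu_i^\top \bbA_\times + \bfA_O')(\pdd_G - \bfg)$. Each summand is $\leq \bm0$ since $\pdd_G \in \calC(\bfnu_i)$, and the weights $\alpha_i$ are nonnegative, so the combination is componentwise nonpositive; therefore $\pdd_G \in \calC(\bfp)$. This is precisely the ``sufficient condition'' already used to motivate the halfspace form \eqref{csint1}, now upgraded to an exact characterization by the reverse inclusion.

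I do not expect a genuine obstacle, as the argument is elementary once the affine dependence is isolated. The only point meriting a line of care is to insist that $\calC(\bfp)$ be taken in its halfspace form \eqref{tensor-form}, which is linear in $\pdd_G$ for \emph{every} $\bfp$ regardless of the sign of $\sigma_{\hat\bfa}$; the polar normalization of Section~\ref{3dvol}, which requires $\sigma_{\hat\bfa} > 0$, is merely a device for vertex enumeration and plays no role in the set identity. With the two inclusions in hand, the equality $I_\calT = \bigcap_{\bfp \in \calT} \calC(\bfp)$ follows directly.
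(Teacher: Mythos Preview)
Your proof is correct and follows essentially the same two-inclusion argument as the paper: the forward inclusion uses that the constraint vector at a convex combination $\bfp=\sum_i\alpha_i\bfnu_i$ is the corresponding convex combination of the vertex constraints (the paper phrases this as the chain \eqref{csint1} $\Rightarrow$ \eqref{csfun2} $\Rightarrow$ \eqref{csfun1}), and the reverse inclusion specializes feasibility at all of $\calT$ to the vertices. Your explicit isolation of the affine dependence on $\bfp$ and your remark that the polar normalization of Section~\ref{3dvol} is irrelevant here are welcome clarifications, but the underlying argument is the same.
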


\begin{proof}
    We showed that $\pdd_G \in I_\calT$ is feasible at any
    position $\bfp_G \in \calT$ (by construction, \eqref{csint1} $\Rightarrow$
    \eqref{csfun2} $\Rightarrow$ \eqref{csfun1}). Conversely, suppose that
    $\pdd_G$ is feasible for all $\bfp_G \in \calT = \conv(\{\bfnu_1, \ldots,
    \bfnu_k\})$. In particular, \eqref{csfun2} holds when $\bfp_G$ is equal to
    any vertex $\bfnu_j$. Thus, $\pdd_G \in \calC(\bfnu_j)$, and since the
    index $j$ can be chosen arbitrarily, $\pdd_G \in \cap_j \calC(\bfnu_j) =
    I_\calT$.
\end{proof}

A trajectory $\bfp_G(t)$ is \emph{contact-stable within $\calT$} if it satisfies:
\begin{equation}
    \forall t \in [0, T],\ \bfp_G(t) \in \calT\ \textrm{and}\ \pdd_G(t) \in I_\calT
\end{equation}

This condition can be compared with the previous trajectory-wide stability
criterion from~\cite{caron2015rss}, where trajectories were computed such that
\begin{equation}
    \forall t \in [0, T],\ \pdd_G(t) \in \calC(\bfp_G(t))
\end{equation}
The implicit assumption behind such a strategy is that either
trajectory-tracking is perfect, or small deviations $\delta \bfp_G$ from the
reference $\bfp_G(t)$ can be coped with if $\pdd_G$ lies ``inside enough'' of
its cone. By taking $\calT$ around the trajectory, we explicitly model how far
$\pdd_G$ needs to be inside the cone to cope with a set of deviations $\delta
\bfp_G$. In this sense, our criterion is a \emph{robust} condition for
trajectory-wide contact stability, with explicit modelling of the robustness
margin.

Although the tube-wise intersection formalized by Equation~\eqref{I-bigcap}
seems like a severely restrictive condition, we noticed (to our surprise) that
the acceleration cones left after intersection are still sufficient for
locomotion in challenging scenarios (see \eg Section~\ref{sec:xp}).

\begin{table}[th]
    \centering
    \caption{
        Effect of the dual convex-hull reduction on the size of the
        inequality-constraint matrix $\bfC_\calT$. Averages are given on $26$
        matrices (one for each step of the circular staircase).
    }
    \label{table:dimrec}
    \begin{tabular}{rcc}
        & $|\bfC_\calT|$ & $|\bfC'_\calT|$ \\
        \hline 
        Single support & $376 \pm 226$ & $6 \pm 2$ \\
        Double support & $484 \pm 221$ & $6 \pm 2$ \\
        \hline
    \end{tabular}
\end{table}

\subsection{Integration with preview control}

At each new control step, the current state $\bfx_0 = (\bfp_0, \pd_0)$ of the
center of mass feeds the preview controller. We define the target state at the
end of the time horizon by $\bfx_T = (\bfp_T, \bfv_T)$, where $\bfp_T$ is
determined from the current stance in the step sequence, $\bfv_T$ is a
reference velocity of $0.4$ m.s$^{-1}$ oriented in the direction of motion, and
$T$ is calculated from the time remaining in the current gait phase. From
there, our method goes as follows:
\begin{enumerate}
    \item Define $\calT$ containing $[\bfp_0, \bfp_T]$
        and compute its halfspace representation $\bfP_\calT$ using
        the double-description. In practice, we defined $\calT$ by a polyhedral
        cylinder centered on $[\bfp_0, \bfp_T]$ with square cross-sections and
        a robustness radius of 5 cm. 
    \item Compute the halfspace representation $\bfC_\calT$ of $I_\calT$ from
        the contact wrench cone $\bfA_O$ (which is computed only once per
        support phase).
    \item Reduce $\bfC_\calT$ to polar form $\bfB_\calT {[x\ y]^\top} \leq
        \bm1$ and compute its vertex representation $\bfg + \rays(\{\bfr_i\})$
        using a convex hull algorithm\footnote{We used
        \emph{Qhull}~\cite{barber1996quickhull}, available from
    \url{http://www.qhull.org/}} (Section~\ref{3dvol}).
    \item Compute its non-redundant halfspace-representation $\bfC'_\calT$
        using again the double-description.
\end{enumerate}
Next, we formulate the preview control problem as a quadratic program:
\begin{eqnarray}
    \label{qp-obj}
    \min_{\bfU} & : & \| \bfx(N) - \bfx_T \|^2 + \epsilon \|\bfU\|^2 \\
    \textrm{s.t.} & : & \forall k,\ \bfC'_\calT \bfu(k) \leq \bfC'_\calT \bfg \label{qp-lin-ineq} \\
    & & \forall k,\ \bfP_\calT \bfx(k) \leq \bm1
    \label{qp-poly-ineq}
\end{eqnarray}
where $\bfU \defeq \bfU(N-1)$ is the stacked vector of controls from which all
$\bfx(k)$ and $\bfu(k)$ derive by Equation~\eqref{iter-disdyn}. The inequality
constraints \eqref{qp-lin-ineq} and \eqref{qp-poly-ineq} are a linear
decoupling of \eqref{quad-ineq} via polyhedral bounds $\calT$.

As a matter of fact, using polyhedral bounds can be thought of as a general
linearization technique. In~\cite{brasseur2015humanoids}, a linear decoupling
was also obtained by bounding vertical COM accelerations. Similarly, polytopes
of robust COM positions $\bfp_G$ were obtained in~\cite{caron2015rss,
delprete2016icra} by defining polyhedral bounds on disturbances $\bfepsilon$, thus
eliminating the bilinear coupling between $\bfp_G$ and $\bfepsilon$. 

Eliminating redundancy in the pipeline above is a significant computational
step. From one of our experiments (Table~\ref{table:dimrec}), the number of
lines $|\bfC'_\calT|$ in $\bfC'_\calT$ is two orders of magnitude smaller than
that of $\bfC_\calT$. Given that the number of inequalities~\eqref{qp-lin-ineq}
in the above QP is $N |\bfC'_\calT|$, this makes the difference in practice
between solving a problem of size $100$ versus $10,000$ (we use $N=10$ steps).

\begin{table}[th]
    \caption{
        Breakdown of computation times inside the predictive controller,
        averaged over 2000 calls in Experiment VI.B.
    }
    \label{table:times-stair}
    \centering
    \begin{tabular}{rcr}
        Function & Output & Time (ms) \\
        \hline
        Double description of $\calT$ (\ss~and \ds) & $\bfP_\calT$ & $0.3 \pm 0.1$ \\
        H-representation of $I_\calT$ (\ss~and \ds) & $\bfC_\calT$ & $0.2 \pm 0.1$ \\
        Convex hull of $I_\calT$ (\ss~and \ds) & $\bfC'_\calT$ & $2.4 \pm 1.2$ \\
        Other matrix operations & -- & $0.4 \pm 0.4$ \\
        Solving final QP $(N=10)$& $\bfU$ & $0.2 \pm 0.1$ \\
        \hline
        Total &  & $3.5 \pm 2.1$
    \end{tabular}
\end{table}

\subsection{Locomotion state machine}

Our preview controller is applied to the HRP-4 humanoid in various
environments. The inputs to the controller are the current COM position
$\bfp_0$ and velocity $\pd_0$, the preview time horizon $T$, a target COM
position $\bfp_T$, $\calT$ and its acceleration cone $I_\calT$.
These computation of these inputs is supervised by a Finite State Machine (FSM)
that cycles between four phases $\varphi \in \{\SSL, \DSR, \SSR, \DSL\}$, where
$\textsf{\textsc{ss}}$ (resp. \textsf{\textsc{ds}}) stands for single-support
(resp. double-support), while \textsf{\textsc{l}} and \textsf{\textsc{r}}
indicate that the phase ends on the left and right foot, respectively. Each
phase is thus associated with a unique foot contact. The target COM position
$\bfp_G^*(\varphi)$ of a phase is then taken $0.8$~m above this contact.

Phase durations are set to $T_\ss = 1$~s for single-support and
$T_\ds = 0.5$~s for double-support. At each iteration of the control
loop, the input to the preview controller is decided based on the time
$T_\rem$ remaining until the next phase transition. Let us denote by
$\varphi$ the current phase in the FSM and $\varphi'$ the phase after
$\varphi$. We define preview targets by:
\begin{enumerate}
    \item if $\varphi$ is single-support and $T_\rem < \frac12
        T_\ss$:
        \begin{itemize}
            \item $T \leftarrow T_\rem + T_\ds + \frac12 T_\ss$
            \item $\bfp_G \leftarrow \bfp_G^*(\varphi'')$
        \end{itemize}
    \item otherwise, if $\varphi$ is double-support:
        \begin{itemize}
            \item $T \leftarrow T_\rem + \frac12 T_\ss$
            \item $\bfp_G \leftarrow \bfp_G^*(\varphi)$
        \end{itemize}
    \item otherwise ($\varphi$ is single-support and $T_\rem > \frac12 T_\ss$):
        \begin{itemize}
            \item $T \leftarrow T_\rem$
            \item $\bfp_G \leftarrow \bfp_G^*(\varphi)$
        \end{itemize}
\end{enumerate}
Case 1) switches the target of the preview controller to the next staircase
step in the middle of single-support phases\footnote{The same behavior is
present in~\cite{kajita2003icra}: if the control from Figure~5 of this paper
was followed to the end, the COM velocity would go to zero between each step,
which is not the behavior observed in Figures~7 and 8.}, which forces the robot
to start its next step while allowing it to re-use the kinetic momentum in the
direction of motion.

Note that cases 1) and 2) imply contact switches in the middle of preview
trajectories, which different cones $\bfC'_\calT$ depending on the step $k$ in
the preview problem. To take this into account, we compute the switching step
$k_\rem = T_\rem / \Delta T$ along with two tubes $\calT_\ss \subset \calT_\ds$
and their dual cones (computations between these two overlapping tubes can be
factored; see~\cite{code} for details). The corresponding matrices
$(\bfC'_{\calT_\ss}, \bfP_{\calT_\ss})$ and $(\bfC'_{\calT_\ds},
\bfP_{\calT_\ds})$ are then respectively used in
Equations~\eqref{qp-lin-ineq}-\eqref{qp-poly-ineq} for $k \leq k_\rem$ and $k >
k_\rem$.

A breakdown of computation times inside the overall predictive controller is
reported in Table~\ref{table:times-stair}. All computations were run on an
average laptop computer (Intel\textsc{(r)} Core\textsc{(tm)} i7-6500U CPU @
2.50 Ghz).

\subsection{Whole-body controller}

The last step of the pipeline is to convert task objectives, such as COM or
foot positions, into joint commands sent to motor controllers. For this, we
used our own solver implemented in the \emph{pymanoid}
library.\footnote{\url{https://github.com/stephane-caron/pymanoid}} It solves a
single quadratic program on five weighted tasks (see~\cite{caron2016tro} for
details), by decreasing priority: support foot, swing foot, COM tracking,
constant angular-momentum, and posture tracking for regularization. The
corresponding task weights were respectively set to $(10^4, 10^2, 10, 1,
10^{-1})$. Each QP solution provides joint-angle velocities $\qd_\textrm{ref}$,
which is then sent to the robot. See Figure~\ref{fig:pipeline} for a summary of
our pipeline. 

\begin{figure}[t]
    \centering
    \includegraphics[width=0.98\columnwidth]{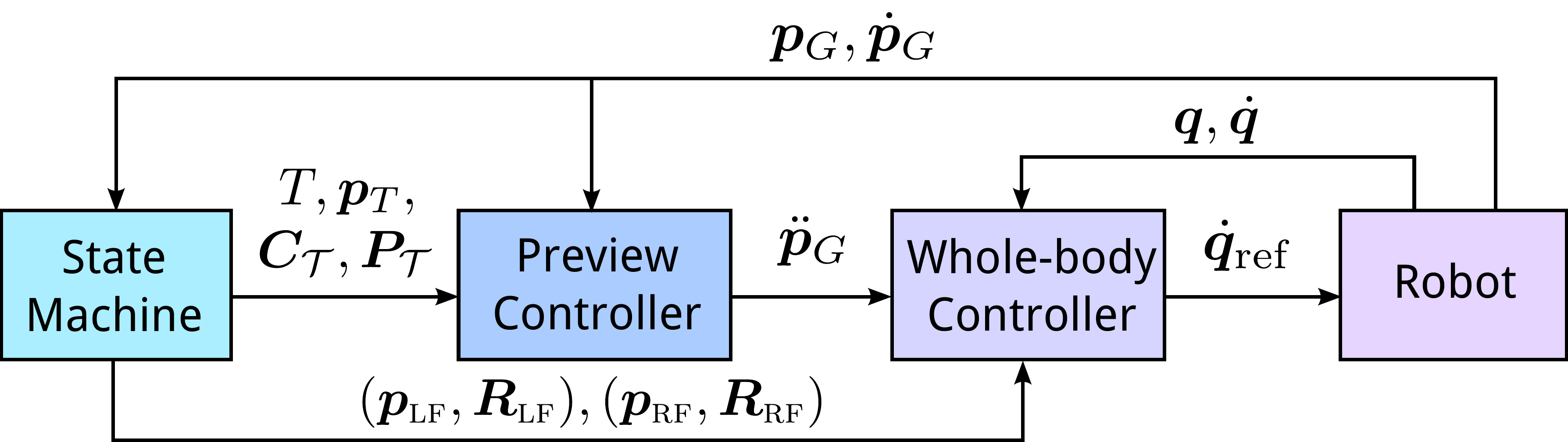}
    \caption{
        Overview of our control pipeline. The COM trajectory polyhedron
        $\bfP_\calT$ and its corresponding contact-stability cone $\bfC_\calT$
        are computed using the method described in Section~\ref{mpc101}.
    }
    \label{fig:pipeline}
\end{figure}

\section{Experiments}
\label{sec:xp}

The ground truth for contact stability is the existence of feasible contact
forces $\bff_\all$ summing up to the net wrench $\hat{\bfw}$ of the motion. In
all experiments, we validate our trajectories by checking the existence of such
$\bff_\all$ at each time instant. In both experiments, friction coefficients
were set to $\mu=0.7$.

\subsection{Regular staircase and walking into an aircraft}

Our first scenario, provided by Airbus Group, takes place in a 3D realistic
model of scale 1:1 for a section of the A350 airplane. This mock-up will be
used for experiments with the real robot when the research matures to an
integrated software, and gets approval from Airbus Group. In order to access a
predefined spot in the A350, the robot needs to climb stairs (size of those
available in the factory), walk on a platform (flat ground) to finally reach
the mockup floor composed of removable tiles that can be uneven and disposed in
various locations, see Figure~\ref{fig:airbus}. In this simulation, the
footprints were given together with the timing for the \textsf{\textsc{ds}} and
\textsf{\textsc{ss}} phases (respectively 0.5~s and 1~s).

\subsection{Slanted circular staircase with tilted steps}

The slanted circular staircase depicted in Figure~\ref{fig:staircase} has 26
steps randomly rolled, pitched and yawed by angles $(\theta_r, \theta_p,
\theta_y) \in [-0.5, +0.5]^3$~rad. The average radius of the staircase is $1.4$
m, and the altitude difference between the highest and lower steps is also
$1.4$~m.

For this scenario, the reference durations of single and double support phases
were set to $T_\ss = 1$~s and $T_\ds = 0.5$~s, respectively. We concur
with~\cite{audren2014iros} that the question of finding proper timings becomes
crucial in multi-contact. Having constant durations overlooks the fact that
some steps are harder to take than others (due to their respective
tilting, altitude difference, etc.).\footnote{This question is less critical
for walking on horizontal or well structured floors, where all steps are
similar.} Being unable to find a single pair of constants $(T_\ss, T_\ds)$
suited to the whole staircase, we opted for the following workaround condition:
\begin{itemize}
    \item [\textsc{w)}] At the end of a double-support phase, wait for the COM to be
        above the static-equilibrium polygon of the next single-support before
        activating the phase transition.
\end{itemize}
This choice is motivated by the link~\eqref{link-with-sep} between the 3D cone
of COM accelerations and the position of the COM in the static-equilibrium
polygon. In practice, it allows the use of ``optimistic'' values of $(T_\ss,
T_\ds)$ while only extending $T_\ds$ when necessary. 

\begin{figure}[!t]
    \centering
    \includegraphics[width=0.98\columnwidth]{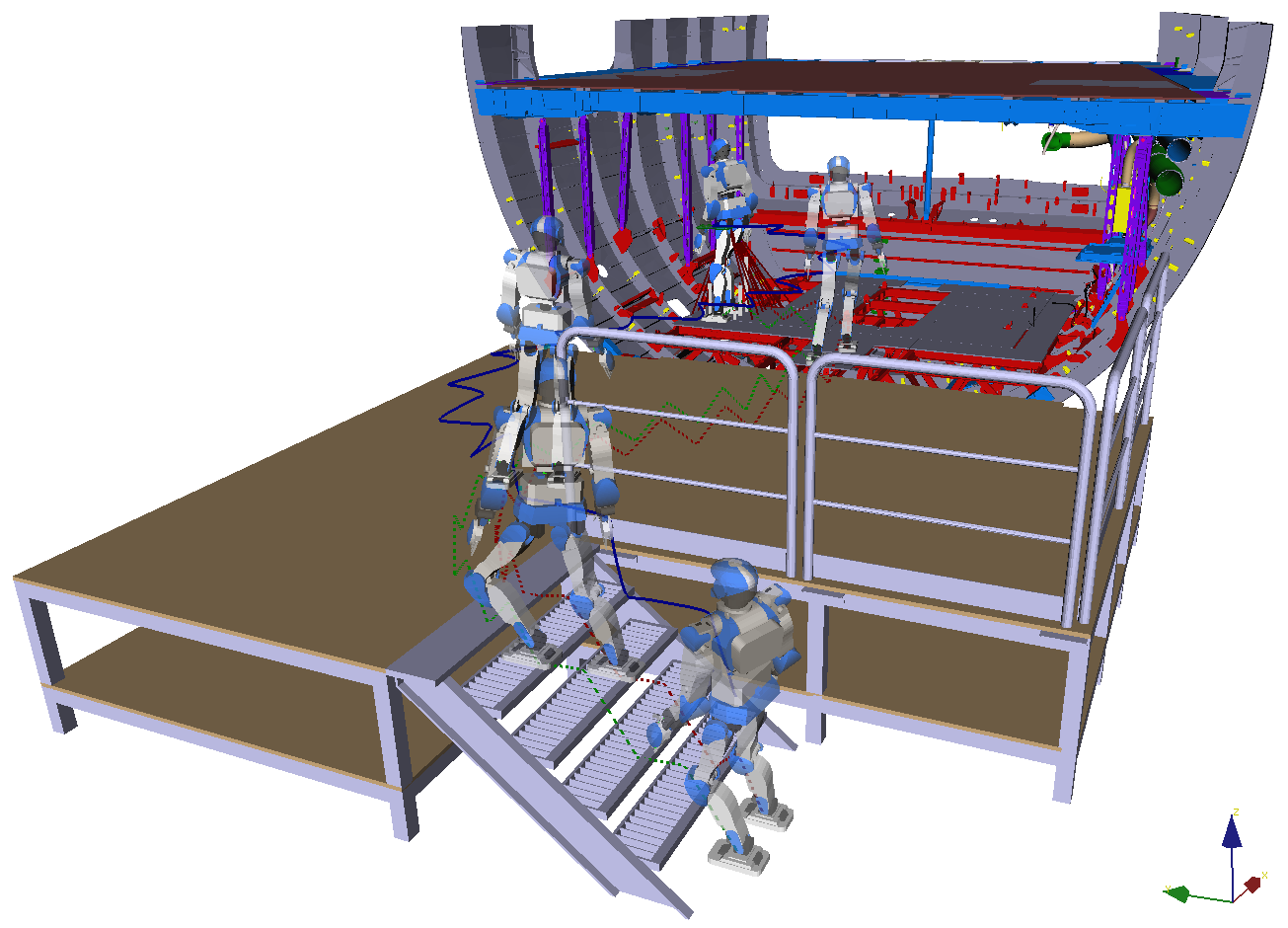}
    \caption{
        HRP-4 accessing the floor-shop of an A350 through stairs and reach the
        working areas by walking on tiles. Foot trajectories (dashed lines)
        step over two staircase steps at a time, as done in natural stair
        climbing. The COM trajectory (blue line) illustrates the progression of
        the robot until its target configuration inside the aircraft. Contact
        stability of the whole motion was cross-validated by checking the
        existence of groundtruth contact forces $\bff_\all$ at each timestep,
        as shown in the accompanying video~\cite{code}.
    }
    \label{fig:airbus}
\end{figure}

\section{Conclusion}

We presented a multi-contact walking pattern generator based on preview-control
of the 3D acceleration of the center of mass. Our development builds upon
algebraic manipulations of friction cones as dual twists, thanks to which we
can recompute 3D cones of feasible COM accelerations in real-time. We then
showed how to intersect these cones over the preview window of a model-preview
controller to construct a conservative trajectory-wide contact-stability
criterion. We implemented this pipeline and illustrated it with the HRP-4
humanoid model in multi-contact dynamically walking scenarios. All our source
code is released at~\cite{code}.

\section*{Acknowledgment}

The authors would like to warmly thank Komei Fukuda for his helpful feedback
and Patrick Wensing for pointing out a calculation mistake in a preliminary
version of the paper. This paper has also benefitted from rich discussions with
Herv\'e Audren and Adrien Escande.

\bibliographystyle{IEEEtran}
\bibliography{refs}

\appendix

We compare the convex hull reduction to the original
calculation~\cite{bretl2008tro} of the static-equilibrium polygon. Computation
times for randomly sampled contact configurations are reported in
Table~\ref{table:times2} for four algorithms:
\begin{itemize}
    \item \emph{cdd + hull}: the method described in
        Section~\ref{vertex-enum}, where \emph{cdd}~\cite{fukuda1996double}
        is used to compute the CWC while convex hulls are computed with
        \emph{Qhull}~\cite{barber1996quickhull}.
    \item \emph{Parma + hull}: same approach, using the Parma
        Polyhedra Library\footnote{\url{https://github.com/haudren/pyparma}}
        rather than \emph{cdd} to compute the CWC. 
    \item \emph{cdd only}: as described in~\cite{zhang2016ijhr}, \emph{cdd} can
        also be used to compute the static-equilibrium polygon directly.
    \item \emph{Bretl \& Lall}: the algorithm from~\cite{bretl2008tro}, in the
        implementation from~\cite{pham2015tm} but using GLPK as LP
        solver.\footnote{Using an efficient LP solver is crucial here: in a
        preliminary version of this paper, we used the more general CVXOPT,
        which resulted in computations around $10 \times$ slower than those we
        now report using GLPK.}
\end{itemize}
The \emph{Parma + hull} solution is the slowest but most numerically stable,
while \emph{cdd only} is only competitive in single-support. Neck to neck are
\emph{Bretl \& Lall} and \emph{cdd + hull}, with the latter faster in single-
and double-support. But the real benefit of our approach comes with the
computation of time-varying criteria: in double- and triple-support, we see
that executing \emph{hull only} is more than ten times faster than applying any
other algorithm from scratch. We also highlight it as the fastest solution for
single-support, as in this case the CWC is known
analytically~\cite{caron2015icra} and there is no need for the \emph{cdd} step.

\begin{table}[h]
    \caption{
        Time (in ms) to compute the static-equilibrium polygon, averaged over
        100 random contact configurations.
    }
    \label{table:times2}
    \begin{tabular}{rrrr}
        Algorithm & Single support & Double support & Triple support \\
        \hline
        Parma + hull                     & $6.02 \pm 0.20$  & $21.0 \pm 4.2$ & $42 \pm 11$ \\
        cdd only                         & $0.38 \pm 0.01$  & $7.0  \pm 2.7$ & $> 500$ \\
        Brel \& Lall~\cite{bretl2008tro} & $1.00 \pm 0.02$  & $3.1  \pm 0.8$ & $\bf 5.9 \pm 1.6$ \\
        cdd + hull                       & $0.60 \pm 0.01$  & $\bf 2.7  \pm 0.6$ & $7.1 \pm 1.9$ \\
        \hline
        \it hull only                    & $\bf 0.17 \pm 0.003$ & $\it 0.28 \pm
        0.04$ & $\it 0.38 \pm 0.09$ \\
        \hline
    \end{tabular}
\end{table}

\end{document}